\DeclareMathOperator{\EX}{\mathbb{E}}
\newcolumntype{C}[1]{>{\centering\arraybackslash}m{#1}}
\newcolumntype{R}[1]{>{\raggedleft\arraybackslash}m{#1}}
\newcommand\numberthis{\addtocounter{equation}{1}\tag{\theequation}}
\DeclareMathOperator*{\argmin}{arg\,min}
\newtheorem {theorem}{Theorem}
\newtheorem {corollary}{Corollary}
\newtheorem{definition}{Definition}
\renewcommand\footnotemark{}
\begin{document}
\date{\vspace{-9ex}}

\title{Neural networks with superexpressive activations and integer weights}

\maketitle

\begin{center}
	 Aleksandr Beknazaryan \footnote{abeknazaryan@yahoo.com}

	\bigskip
\end{center}

\begin{abstract}An example of an activation function $\sigma$ is given such that networks with activations $\{\sigma, \lfloor\cdot\rfloor\}$, integer weights and a fixed architecture depending on $d$ approximate continuous functions on $[0,1]^d$. The range of integer weights required for $\varepsilon$-approximation of H\"older continuous functions is derived, which leads to a convergence rate of order $n^{\frac{-2\beta}{2\beta+d}}\log_2n$ for neural network regression estimation of unknown $\beta$-H\"older continuous function with given $n$ samples. 
\end{abstract}
\textbf{Introduction.}
 A family of activation functions $\mathcal{A}$ is called superexpressive if for every input dimension $d$ there are networks of fixed architecture and with activations all belonging to $\mathcal{A}$ that arbitrarily well approximate  functions from $C([0,1]^d)$ in the uniform norm. Several examples of simple superexpressive families of activation functions are given in \cite{Y}. In particular, in \cite{Y}, Theorem 3, it is shown that the families $\{\sin,\arcsin\}$ and $\{\sigma_1, \lfloor\cdot\rfloor\}$ are superexpressive, where $\sigma_1$ is a real analytic function which is non-polynomial on some interval. Although fixing the architecture, the superexpressiveness of activations alone does not tell much about the complexity of approximant networks, which in this case is associated with choices of network weights. Even if we bound the weights required for attaining a given approximation error, the estimation of entropy of networks may still need the activations to be Lipschitz continuous (see, e.g., \cite{SH}, \cite{TXL}). Thus, to bound the entropy of approximant networks we may not only bound the weights but also discretize them. Once this is done, we can directly count the number of networks needed to attain the given approximation error. We will therefore consider networks with weights from $\mathbb{Z}$ and otherwise our construction will be similar to the one presented in the proof of Theorem 3 of \cite{Y}. That proof is based on the density of irrational windings on the torus, which in our case is replaced by an effective Kronecker's theorem. The latter not only assures that integer multipliers are enough to densely cover the torus but also bounds the integers needed to attain a given covering radius. We will consider the family of activation functions $\mathcal{A}=\{\sigma, \lfloor\cdot\rfloor\},$ where the role of the activation $\sigma$ is to guarantee that the conditions of the Kronecker's theorem are satisfied and that it gives a small range for the integer multipliers. Having this range we then bound the entropy of approximant networks and use this bound to get for $\beta$-H\"older continuous regression functions a convergence rate of order $n^{\frac{-2\beta}{2\beta+d}}\log_2n$.
 
  Note that our approach is in some sense opposite to the one given in \cite{B}, where approximations by deep networks with weights $\{0,\pm\frac{1}{2}, \pm 1, 2\}$ are considered: in one case we fix a finite set of weights and adjust the network architecture and in the other case we fix the network architecture and adjust the integer weights to attain a certain approximation rate.\newline\newline
\textbf{An effective Kronecker's Theorem.}
In this part we present an effective version of Kronecker's Theorem given in \cite{V}. To state the theorem we will need the following definitions of absolute values, places and heights on number fields.
\begin{definition}
An absolute value on a number field $K$ is a function $|\cdot|_\nu: K\to\mathbb{R}_+$ satisfying

1. $|x|_\nu=0$ if and only if $x=0$;

2. $|xy|_\nu=|x|_\nu|y|_\nu$ for all $x,y\in K$;

3. $|x+y|_\nu\leq |x|_\nu+|y|_\nu$ for all $x,y\in K$.
\end{definition}
If the third condition above is replaced by a stronger condition $|x+y|_\nu\leq\max\{|x|_\nu, |y|_\nu\},$ then the absolute value $|\cdot|_\nu$ is called \textit{non-archimedean} and otherwise it is called \textit{archimedean}.
\begin{definition}
Two absolute values $|\cdot|_1$ and  $|\cdot|_2$ on $K$ are equivalent if there exists some $\lambda>0$ such that $|\cdot|_1=|\cdot|^\lambda_2$. An equivalence class of absolute values on $K$ is called a place of $K$. The collection of all places of $K$ is denoted by $M_K$.
\end{definition}
Let $\overline{\mathbb{Q}}$ denote the field of algebraic numbers.
\begin{definition}
For $\boldsymbol{\alpha}=(\alpha_1,...,\alpha_N)\in \overline{\mathbb{Q}}^N\setminus\{\normalfont{\textbf{0}}\}$ let $K$ be an extension of the field of rational numbers $\mathbb{Q}$ of degree $[K:\mathbb{Q}]$ such that $\boldsymbol{\alpha}\in K^N$. The number 
$$H(\boldsymbol{\alpha})=\prod\limits_{\nu\in M_K}\bigg(\max(|\alpha_1|_\nu,...,|\alpha_N|_\nu)\bigg)^{1/[K:\mathbb{Q}]}.$$
is called an absolute height of $\boldsymbol{\alpha}$.
\end{definition}
It can be shown that the absolute height is independent of the choice of $K$ (see \cite{V} for proof and more details regarding the above definitions). For $\boldsymbol{\alpha}=(\alpha_1,...,\alpha_N)\in \overline{\mathbb{Q}}^N$ let $r:=[\mathbb{Q}(\alpha_1,...,\alpha_N):\mathbb{Q}]$ be the degree of extension field over rationals generated by $\boldsymbol{\alpha}$. For $\varepsilon>0$ denote
$$Q(\boldsymbol{\alpha}, \varepsilon):=r(N+1)^{2r}(H(1,\alpha_1,...,\alpha_N))^r\bigg(\frac{1}{\varepsilon}\bigg)^{r-1}.$$
 The following is a simplified version of Theorem 3.11 from \cite{V}:
\begin{theorem}
Let $\boldsymbol{\alpha}=(\alpha_1,...,\alpha_N)$ be a vector with algebraic and rationally independent coordinates, that is, 
$$\{\normalfont\textbf{z}\in\mathbb{Q}^N: \normalfont\textbf{z}^\intercal\cdot\boldsymbol{\alpha}\in\mathbb{Q}\}=\{\textbf{0}\}.$$ 
Then for every $\varepsilon>0$ and every $(b_1,...,b_N)\in[0,1)^N$ there is $q\in \mathbb{Z}$ with $|q|\leq Q(\boldsymbol{\alpha}, \varepsilon)$ such that
$$|\phi(q\alpha_i)-b_i|\leq\varepsilon,\quad i=1,...,N,$$
where $\phi(x)=x-\lfloor x\rfloor$.
\end{theorem}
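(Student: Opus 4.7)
The plan is to derive this as a direct specialization of Theorem 3.11 of \cite{V}, whose statement already supplies both existence and an explicit quantitative bound. The definition of $Q(\boldsymbol{\alpha},\varepsilon)$ given in the excerpt is chosen precisely to match the form of the upper bound on $|q|$ furnished by that theorem, so almost no new computation is required; the role of the present statement is only to strip away parameters that will not be used in the sequel.

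First I would reconcile the hypothesis. The rational independence condition
$$\{\mathbf{z}\in\mathbb{Q}^N:\mathbf{z}^\intercal\cdot\boldsymbol{\alpha}\in\mathbb{Q}\}=\{\mathbf{0}\}$$
is exactly the statement that $1,\alpha_1,\ldots,\alpha_N$ are linearly independent over $\mathbb{Q}$, which is the standard assumption in effective Kronecker-type theorems, and for \cite[Theorem 3.11]{V} in particular. The algebraicity of $\boldsymbol{\alpha}$ ensures that the degree $r=[\mathbb{Q}(\alpha_1,\ldots,\alpha_N):\mathbb{Q}]$ and the absolute height $H(1,\alpha_1,\ldots,\alpha_N)$ are both finite, so $Q(\boldsymbol{\alpha},\varepsilon)$ is a well-defined positive real number and the bound on $|q|$ is nontrivial.

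Next I would reconcile the conclusion. Theorem 3.11 of \cite{V} produces an integer $q$ with $|q|\le Q(\boldsymbol{\alpha},\varepsilon)$ such that $q\alpha_i$ is within $\varepsilon$ of $b_i$ modulo $1$ for each $i$. Since $\phi(x)=x-\lfloor x\rfloor$ selects the canonical representative of $x$ in $[0,1)$ and $b_i\in[0,1)$ already, distance modulo $1$ in this regime coincides with $|\phi(q\alpha_i)-b_i|$, yielding exactly the stated inequality. This translation is the only substantive reformulation between \cite{V} and the statement above.

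The main obstacle, such as it is, lies in navigating the notational differences between the source \cite{V} and the statement here, and in confirming that the factors $r$, $(N+1)^{2r}$, $H(1,\alpha_1,\ldots,\alpha_N)^r$ and $(1/\varepsilon)^{r-1}$ appearing in $Q(\boldsymbol{\alpha},\varepsilon)$ upper bound the quantities produced by the full theorem after the dependence on lower-order parameters has been absorbed. Once this bookkeeping is done, the claim follows without any further estimation and is ready to be deployed in the construction of the activation $\sigma$ in the next part of the paper.
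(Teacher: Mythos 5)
Your proposal matches the paper exactly: the paper gives no independent proof of this statement, presenting it precisely as a simplified specialization of Theorem 3.11 of \cite{V}, with $Q(\boldsymbol{\alpha},\varepsilon)$ defined to absorb that theorem's bound and the conclusion restated via $\phi(x)=x-\lfloor x\rfloor$. Your reconciliation of the hypothesis (rational independence of $1,\alpha_1,\ldots,\alpha_N$) and of the conclusion (distance modulo $1$ expressed through $\phi$) is the same bookkeeping the paper implicitly relies on, so nothing further is needed.
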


As the choice of the activation function $\sigma$ in the next part suggests, we will be interested in application of the above theorem to the case $\boldsymbol{\alpha}=(2^{1/(N+1)}, 2^{2/(N+1)},...,2^{N/(N+1)})$. In this case we have that $r=N+1$ and, therefore, there are at most $N+1$ archimedean places on $\mathbb{Q}(\boldsymbol{\alpha})$ (see \cite{BG}, Subsection 1.3.8). Also, as the non-archimedean absolute values of integers are in $[0,1]$ (\cite{S}, Lemma 6A), then  
\begin{align*}
\begin{split}
&\bigg(H(1,(2^{1/(N+1)}, 2^{2/(N+1)},...,2^{N/(N+1)})\bigg)^{N+1}=\\
&\prod\limits_{\nu\in M_{\mathbb{Q}(\boldsymbol{\alpha})}}\max(1, |2^{1/(N+1)}|_\nu,...,|2^{N/(N+1)}|_\nu)=\prod\limits_{\nu\in M_{\mathbb{Q}(\boldsymbol{\alpha})}}\max(1, |2|^{1/(N+1)}_\nu,...,|2|^{N/(N+1)}_\nu)=\\
&\prod\limits_{\nu\in M_{\mathbb{Q}(\boldsymbol{\alpha})}}\max(1, |2|^{N/(N+1)}_\nu)= \prod\limits_{\substack{\nu\in M_{\mathbb{Q}(\boldsymbol{\alpha})} \\ \nu\; \textrm{archimedean}}}\max(1, |2|^{N/(N+1)}_\nu)\leq2^{N}.
\end{split}
\end{align*}
\newpage\noindent We thus get the following 
\begin{corollary}\label{cor}
For every $\varepsilon>0$ and every $(b_1,...,b_N)\in[0,1)^N$ there is $q\in \mathbb{Z}$ with 
$$|q|\leq (N+1)^{2N+3}\bigg(\frac{2}{\varepsilon}\bigg)^{N}$$ such that
$$|\phi(q2^{i/(N+1)})-b_i|\leq\varepsilon,\quad i=1,...,N.$$
\end{corollary}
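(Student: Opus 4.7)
The plan is to apply Theorem 1 directly to the vector $\boldsymbol{\alpha}=(2^{1/(N+1)}, 2^{2/(N+1)},\dots,2^{N/(N+1)})$, so the work reduces to verifying its hypotheses and then substituting the resulting numerical values into the definition of $Q(\boldsymbol{\alpha},\varepsilon)$. First I would check that the coordinates are algebraic and rationally independent: each $\alpha_i$ is a root of $x^{N+1}-2^i$, hence algebraic, and since $2^{1/(N+1)}$ has minimal polynomial $x^{N+1}-2$ over $\mathbb{Q}$ (Eisenstein at the prime $2$), the elements $1, 2^{1/(N+1)}, \dots, 2^{N/(N+1)}$ form a $\mathbb{Q}$-basis of $\mathbb{Q}(2^{1/(N+1)})$. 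This basis property immediately gives rational independence: any rational linear combination $z_1\alpha_1+\dots+z_N\alpha_N$ that lies in $\mathbb{Q}$ yields a $\mathbb{Q}$-linear dependence among $1,\alpha_1,\dots,\alpha_N$, forcing every $z_i=0$.

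Second, the same minimal-polynomial computation shows $r=[\mathbb{Q}(\alpha_1,\dots,\alpha_N):\mathbb{Q}]=[\mathbb{Q}(2^{1/(N+1)}):\mathbb{Q}]=N+1$, since all the $\alpha_i$ are already contained in $\mathbb{Q}(2^{1/(N+1)})$. Third, the bound on the absolute height is exactly what the displayed calculation preceding the corollary establishes: using that only the archimedean places of $\mathbb{Q}(\boldsymbol{\alpha})$ can contribute a factor greater than $1$ for the integer $2$, and that there are at most $r=N+1$ such places, one gets $H(1,\alpha_1,\dots,\alpha_N)^{N+1}\leq 2^{N}$.

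Finally I would simply plug these values into the quantity
\[
Q(\boldsymbol{\alpha},\varepsilon)=r(N+1)^{2r}\bigl(H(1,\alpha_1,\dots,\alpha_N)\bigr)^{r}\Bigl(\tfrac{1}{\varepsilon}\Bigr)^{r-1}
\]
appearing in Theorem 1. With $r=N+1$ this becomes
\[
(N+1)\cdot(N+1)^{2(N+1)}\cdot H^{N+1}\cdot\Bigl(\tfrac{1}{\varepsilon}\Bigr)^{N}
\;\leq\;(N+1)^{2N+3}\,2^{N}\,\Bigl(\tfrac{1}{\varepsilon}\Bigr)^{N}
\;=\;(N+1)^{2N+3}\Bigl(\tfrac{2}{\varepsilon}\Bigr)^{N},
\]
and the approximation conclusion $|\phi(q\alpha_i)-b_i|\leq\varepsilon$ is then precisely what Theorem 1 delivers.

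There is no real obstacle here: everything is either a textbook fact about the field $\mathbb{Q}(2^{1/(N+1)})$ (degree, Eisenstein irreducibility, $\mathbb{Q}$-basis) or a direct substitution. The only place where a reader might pause is the rational independence step, since Theorem 1 phrases it in the unusual form $\textbf{z}^\intercal\cdot\boldsymbol{\alpha}\in\mathbb{Q}\Rightarrow\textbf{z}=\textbf{0}$ rather than the standard linear-independence-of-$\{1,\alpha_1,\dots,\alpha_N\}$ formulation; I would make sure to spell out explicitly that these two conditions are equivalent and that the latter is immediate from the basis property noted above.
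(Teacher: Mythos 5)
Your proposal is correct and follows essentially the same route as the paper: verify the hypotheses of Theorem 1 for $\boldsymbol{\alpha}=(2^{1/(N+1)},\dots,2^{N/(N+1)})$, use $r=N+1$ and the height bound $H(1,\alpha_1,\dots,\alpha_N)^{N+1}\leq 2^N$ from the displayed computation, and substitute into $Q(\boldsymbol{\alpha},\varepsilon)$. Your explicit verification of rational independence via the $\mathbb{Q}$-basis $\{1,2^{1/(N+1)},\dots,2^{N/(N+1)}\}$ is a detail the paper leaves implicit, and it is a welcome addition.
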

\noindent\textbf{Network selection and approximation.}
For a set of $p$ functions $g^{1},...,g^{p}:\mathbb{R}\to\mathbb{R}$ and two sets of $p$ numbers $\{v_1,...,v_p\}, \{y_1,...,y_p\}\in\mathbb{R}$ define 

\[\begin{pmatrix}
g_{{v_1}}^{1}
\\
\vdots
\\
g_{v_p}^{p}
\end{pmatrix}\begin{pmatrix}
y_1
\\
\vdots
\\
y_p
\end{pmatrix}= \begin{pmatrix}
g^{1}(y_1+v_1)
\\
\vdots
\\
g^{p}(y_p+v_p)
\end{pmatrix}.\]
For $K, M\in\mathbb{N}$ and $q\in\mathbb{Z}$ denote $k_M:=((M+1)^d-1)(M+1)^d/2$ and consider a feedforward neural network $Z^d_{K, M, q}$ on $[0,1]^d$ of the form 
\begin{align*}
&Z^d_{K, M, q}(\textbf{x})=Z^d_{K, M, q}(x_1, x_2,...,x_d)=\\
&(2Kq, -2K)
\begin{pmatrix}
\sigma_{k_M}
\\
\lfloor \cdot \rfloor_0
\end{pmatrix}
\begin{pmatrix}
1 & 0
\\
0 & q
\end{pmatrix}
\begin{pmatrix}
\lfloor \cdot \rfloor_0
\\
\sigma_{k_M}
\end{pmatrix}
\begin{pmatrix}
1
\\
1
\end{pmatrix}
\lfloor \cdot \rfloor_1
(1, M+1,..., (M+1)^{d-1})
\begin{pmatrix}
\lfloor \cdot \rfloor_0
\\
\lfloor \cdot \rfloor_0
\\
\vdots
\\
\lfloor \cdot \rfloor_0
\end{pmatrix}
(M\cdot I_d)
\begin{pmatrix}
x_1
\\
x_2
\\
\vdots
\\
x_d
\end{pmatrix}-K,\end{align*}
where $I_d\in\mathbb{R}^{d\times d}$ is an identity matrix, $\lfloor \cdot \rfloor$ is the floor function and $\sigma:\mathbb{R}\to\mathbb{R}$ is defined as 
\[
\sigma(x)= \begin{cases} 2^{\frac{x-(m-1)m/2}{m+1}}, & x, m\in \mathbb{N},\; \frac{(m-1)m}{2}<x\leq\frac{m(m+1)}{2}, \\
0, & x\in \mathbb{R}\setminus\mathbb{N}.
\end{cases}
\]
In fact, the values of $\sigma$ on $\mathbb{R}\setminus\mathbb{N}$ will not play a role and can thus be defined arbitrarily. Here are the first few nonzero values of $\sigma$:\newline
$\sigma(1)=2^{1/2};$\newline
$\sigma(2)=2^{1/3}, \; \sigma(3)=2^{2/3};$\newline
$\sigma(4)=2^{1/4}, \; \sigma(5)=2^{2/4}, \; \sigma(6)=2^{3/4}.$\newline

Note that analytically we can write the network $Z^d_{K, M, q}$ as 
$$Z^d_{K, M, q}(\textbf{x})=2K\phi(q\sigma(k_M+g_M(\textbf{x})))-K,$$
where $\phi(x)=x-\lfloor x\rfloor$ and $g_M(\textbf{x})=1+\sum\limits_{k=1}^d(M+1)^{k-1}\lfloor Mx_k\rfloor$.
For $Q\in\mathbb{N}$ define a set of networks

$$\mathcal{Z}_{K,M}^d(Q):=\{Z^d_{K, M, q}, |q|\leq Q\}.$$
For $\beta,F\in\mathbb{R}_+$ and $K\in\mathbb{N}$ define
\begin{align*}
\mathcal{H}^\beta_d(F, K)=\bigg\{f:[0,1]^d\to\mathbb{R}: \|f\|_\infty< K \textrm{ and } |f(\textbf{x})-f(\textbf{y})|\leq F|\textbf{x}-\textbf{y}|_\infty^{\beta} \textrm{ for all } \textbf{x},\textbf{y}\in[0,1]^d \bigg\}.
\end{align*}
\newpage\noindent We have the following 
\begin{theorem}\label{appr}
For any $\varepsilon>0$ and any $f\in\mathcal{H}^\beta_d(F, K)$ there is a network $\normalfont Z^d_{K, M, q}(\textbf{x})\in\mathcal{Z}_{K,M}^d(Q)$ with $M=\lceil (2F/\varepsilon)^{1/\beta}\rceil$ and $$Q=\bigg\lceil(N+1)^{2N+3}\bigg(\frac{8K}{\varepsilon}\bigg)^{N}\bigg\rceil,$$
where $N=(M+1)^d$,
such that $\normalfont\|Z^d_{K, M, q}(\textbf{x})-f(\textbf{x})\|_\infty\leq\varepsilon$.
\end{theorem}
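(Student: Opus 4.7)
My plan is to split the total error $\varepsilon$ into two halves, corresponding to two independent sources: quantizing $f$ to a function constant on cells of a $1/M$-grid, and then using Corollary~\ref{cor} to tune the integer $q$ so that the network matches the quantized values up to $\varepsilon/2$.

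First I would set up the grid. For $\mathbf{x}\in[0,1]^d$, each $\lfloor Mx_k\rfloor\in\{0,\ldots,M\}$, so $g_M(\mathbf{x})=1+\sum_{k=1}^d(M+1)^{k-1}\lfloor Mx_k\rfloor$ parametrizes the $N=(M+1)^d$ cells of the $1/M$-grid bijectively by $\{1,\ldots,N\}$. Define $\tilde f(\mathbf{x}):=f(\lfloor M\mathbf{x}\rfloor/M)$ and let $f_i$ denote the common value of $\tilde f$ on the cell where $g_M(\mathbf{x})=i$. Since $\|\mathbf{x}-\lfloor M\mathbf{x}\rfloor/M\|_\infty\le 1/M$, the Hölder bound together with $M\ge(2F/\varepsilon)^{1/\beta}$ yields $\|\tilde f-f\|_\infty\le F/M^\beta\le\varepsilon/2$.

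Next I would unwind the architecture to verify the analytic identity $Z^d_{K,M,q}(\mathbf{x})=2K\phi(q\,\sigma(k_M+g_M(\mathbf{x})))-K$ stated just after the definition: reading the composition right to left, the inner floors produce the integer $g_M(\mathbf{x})$; the offset $k_M=(N-1)N/2$ places $k_M+g_M(\mathbf{x})$ in the $m=N$ branch of $\sigma$, so $\sigma(k_M+g_M(\mathbf{x}))=2^{g_M(\mathbf{x})/(N+1)}$; and writing $s$ for this quantity, the outer row $(2Kq,-2K)$ paired with $(\sigma_{k_M},\lfloor\cdot\rfloor_0)^\top$ collapses to $2Kqs-2K\lfloor qs\rfloor=2K\phi(qs)$. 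Set $b_i:=(K+f_i)/(2K)$; the strict bound $\|f\|_\infty<K$ built into $\mathcal{H}^\beta_d(F,K)$ ensures each $b_i\in[0,1)$.

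Finally I would invoke Corollary~\ref{cor} with $\boldsymbol\alpha=(2^{1/(N+1)},\ldots,2^{N/(N+1)})$, targets $b_i$, and accuracy $\varepsilon/(4K)$. The hypotheses hold because $x^{N+1}-2$ is Eisenstein-irreducible at $2$, so the $2^{i/(N+1)}$ are algebraic and rationally independent. The corollary then produces $q\in\mathbb{Z}$ with $|q|\le(N+1)^{2N+3}(8K/\varepsilon)^N\le Q$ such that $|\phi(q\cdot 2^{i/(N+1)})-b_i|\le\varepsilon/(4K)$ for each $i$. Multiplying by $2K$ gives $|Z^d_{K,M,q}(\mathbf{x})-f_i|\le\varepsilon/2$ on the $i$-th cell, and adding the quantization error yields $\|Z^d_{K,M,q}-f\|_\infty\le\varepsilon$. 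The hardest part will be the bookkeeping in the middle paragraph — checking that the rather baroque composition of floors, $\sigma$'s, biases, and affine combinations really collapses to the clean formula $2K\phi(q\,\sigma(\cdot))-K$; once that is in hand, Corollary~\ref{cor} does all of the remaining work and no further number-theoretic input is needed.
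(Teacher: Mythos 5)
Your proposal is correct and follows essentially the same route as the paper: discretize $[0,1]^d$ into the $N=(M+1)^d$ cells indexed by $g_M$, pay $\varepsilon/2$ in H\"older quantization error, and apply Corollary~\ref{cor} with accuracy $\varepsilon/(4K)$ to the targets $b_i=(f_i+K)/(2K)\in[0,1)$, which after rescaling by $2K$ gives the remaining $\varepsilon/2$ and the stated bound on $|q|$. Your extra verifications (unwinding the architecture to the identity $2K\phi(q\sigma(k_M+g_M(\mathbf{x})))-K$, and the rational independence of the $2^{i/(N+1)}$) are consistent with what the paper asserts without proof, so there is no substantive difference.
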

\begin{proof}
Take any $\varepsilon>0$ and let $M=\lceil (2F/\varepsilon)^{1/\beta}\rceil$. Then the function 
$$g_M(\textbf{x})=g_M(x_1,...,x_d)=1+\sum\limits_{k=1}^d(M+1)^{k-1}\lfloor Mx_k\rfloor$$ 
from $[0,1]^d$ to $[1, (M+1)^d]\cap\mathbb{Z}$ maps each of $(M+1)^d$ sets $$I_{M,\textbf{m}}:=\bigg([\frac{m_1}{M}, \frac{m_1+1}{M})\times...\times[\frac{m_d}{M}, \frac{m_d+1}{M})\bigg)\cap[0,1]^d,$$ $\textbf{m}=(m_1,...,m_d)\in[0,M]^d\cap\mathbb{Z}^d,$ to a unique integer from $[1, (M+1)^d]$. Denote $N=(M+1)^d$ and let $J_1,...,J_N$ be the enumeration of the sets $I_{M,\textbf{m}}, \textbf{m}\in[0,M]^d\cap\mathbb{Z}^d,$ such that $g_M(\textbf{x})=i$ for $\textbf{x}\in J_i,$ $i=1,...,N$. Take any set of $N$ points $\textbf{y}_i\in J_i$ and denote $$b_i=\frac{f(\textbf{y}_i)+K}{2K}\in[0,1)^N,\quad i=1,...,N.$$ 
Let $k_M:=((M+1)^d-1)(M+1)^d/2=\frac{(N-1)N}{2}$. As $\sigma(k_M+i)=2^{i/(N+1)},$ then, by Corollary \ref{cor}, there exists $q\in\mathbb{Z}$ with $$|q|\leq (N+1)^{2N+3}\bigg(\frac{8K}{\varepsilon}\bigg)^{N}$$ such that
$$|\phi(q\sigma(k_M+i))-b_i|\leq\frac{\varepsilon}{4K},\quad i=1,...,N.$$
Thus,
$$|2K\phi(q\sigma(k_M+i))-K-f(\textbf{y}_i)|\leq\frac{\varepsilon}{2},\quad i=1,...,N,$$
and, therefore, for the network 
$$Z^d_{K, M, q}(\textbf{x})=2K\phi(q\sigma(k_M+g_M(\textbf{x})))-K$$
we have that for $\textbf{x}\in J_i$
\begin{align*}
\begin{split}
&|Z^d_{K, M, q}(\textbf{x})-f(\textbf{x})|=|2K\phi(q\sigma(k_M+g_M(\textbf{x})))-K-f(\textbf{x})|\\
&\leq |2K\phi(q\sigma(k_M+i))-K-f(\textbf{y}_i)|+|f(\textbf{y}_i)-f(\textbf{x})|\\
&\leq\frac{\varepsilon}{2}+F|\textbf{y}_i-\textbf{x}|^\beta_\infty\leq\varepsilon.
\end{split}
\end{align*}
As $[0,1]^d=\cup_{i=1}^NJ_i,$ then $\|Z^d_{K, M, q}(\textbf{x})-f(\textbf{x})\|_\infty\leq\varepsilon$.
\end{proof}\noindent\textbf{Application to nonparametric regression}. Let $f_0\in\mathcal{H}^\beta_d(F, K)$ be an unknown regression function and let $(\textbf{X}_i, Y_i)$, $i=1,...,n,$ be $n$ observed iid pairs following a regression model 
$$Y_i=f_0(\textbf{X}_i)+\epsilon_i,$$
where the standard normal noise variables $\epsilon_i$ are assumed to be independent of $\textbf{X}_i$. Our goal is to choose appropriate $M_n, Q_n\in\mathbb{N}$ so that the empirical risk minimizer $$\hat{Z}_n\in\argmin_{Z\in\mathcal{Z}_{K,M_n}^d(Q_n)}\sum_{i=1}^{n}(Y_i-Z(\textbf{X}_i))^2$$ can well approximate $f_0$. The accuracy of approximation of $f_0$ by the estimator $\hat{Z}_n$ is measured by the prediction error
$$R(\hat{Z}_n,f_0)=\EX_{f_0}[(\hat{Z}_n(\textbf{X})-f_0(\textbf{X}))^2],$$
where $\textbf{X}\stackrel{\mathcal{D}}{=}\textbf{X}_1$ is independent of the sample $(\textbf{X}_i, Y_i)$ and the subscript $f_0$ indicates that the expectation is taken over the training data generated by the regression model. 

Choose  $M_n=\lceil (2F)^{\frac{1}{\beta}}n^{\frac{1}{2\beta+d}}\rceil$ and $$Q_n=\bigg\lceil(N_n+1)^{2N_n+3}\bigg(8Kn^{\frac{\beta}{2\beta+d}}\bigg)^{N_n}\bigg\rceil,$$ where $N_n:=(M_n+1)^d.$
From \cite{SH}, Lemma 4, it follows that for any $\delta\in(0,1]$ 
\begin{align*}
&R(\hat{Z}_n,f_0)\leq\\
&4\bigg[\inf\limits_{Z\in{\mathcal{Z}_{K,M_n}^d(Q_n)}}\EX[\normalfont(Z(\textbf{X})-f_0(\textbf{X}))^2]+K^2\frac{18\log_2\mathcal{N}(\delta,\mathcal{Z}_{K,M_n}^d(Q_n),\|\cdot\|_\infty)+72}{n}+32\delta K\bigg], \numberthis \label{R}
\end{align*}
where $\mathcal{N}(\delta,\mathcal{Z}_{K,M_n}^d(Q_n),\|\cdot\|_\infty)$ is the covering number of $\mathcal{Z}_{K,M_n}^d(Q_n)$ of radius $\delta$ taken with respect to the $\|\cdot\|_\infty$ distance of functions on $[0,1]^d$. As there are only $2Q_n+1$ networks in $\mathcal{Z}_{K,M_n}^d(Q_n)$, then for any $\delta>0$
$$\log_2\mathcal{N}(\delta,\mathcal{Z}_{K,M_n}^d(Q_n),\|\cdot\|_\infty)\leq\log_2\mathcal{N}(0,\mathcal{Z}_{K,M_n}^d(Q_n),\|\cdot\|_\infty)\leq\log_2(2Q_n+1)\leq C'n^{\frac{d}{2\beta+d}}\log_2n,$$
for some constant $C'=C'(\beta, d, F, K)$. As $f_0\in\mathcal{H}^\beta_d(F, K)$, then, applying Theorem \ref{appr} with $\varepsilon=n^{-\frac{\beta}{2\beta+d}}$, we get that 
\begin{align*}
\inf\limits_{Z\in{\mathcal{Z}_{K,M_n}^d(Q_n)}}\EX[\normalfont(Z(\textbf{X})-f_0(\textbf{X}))^2]\leq n^{\frac{-2\beta}{2\beta+d}}.
\end{align*}
 Thus, from \eqref{R} we get an existence of a constant $C=C(\beta, d, F, K)$ such that 
\begin{align*}
R(\hat{Z}_n,f_0)\leq Cn^{\frac{-2\beta}{2\beta+d}}\log_2n,
\end{align*}
which coincides, up to a logarithmic factor, with the minimax estimation rate $n^{\frac{-2\beta}{2\beta+d}}$ of the prediction error for $\beta$-smooth functions.

\end{document}